\newcommand{\TSK}[1]{}
\newtheorem{problem}{Problem}
\newtheorem{definition}{Definition}
\newtheorem{proposition}{Proposition}
\newcommand{\mypara}[1]{\noindent{\textbf{#1.}}}
\newcommand{\method}{\text{LaNoLem}\xspace}
\newcommand{\methodn}{\text{LaNoLem-N}\xspace}
\newcommand{\methodl}{\text{LaNoLem-L}\xspace}
\newcommand{\mSet}{\mathbf{M}\xspace}
\newcommand{\fitalgo}{\textit{Optimization algorithm}\xspace}
\newcommand{\sfit}{Inference\xspace}
\newcommand{\pfit}{Learning\xspace}
\newcommand{\datalen}{N\xspace}
\newcommand{\datadim}{d\xspace}
\newcommand{\sdim}{k\xspace}
\newcommand{\nldim}{d_\phi\xspace}
\newcommand{\nlsdim}{\sdim_\phi\xspace}
\newcommand{\data}{\mathbf{X}}
\newcommand{\State}{\mathbf{S}}
\newcommand{\datapoint}{x}
\newcommand{\datavec}{\mathbf{\datapoint}}
\newcommand{\statepoint}{s}
\newcommand{\statevec}{\mathbf{\statepoint}}
\newcommand{\nlstatevec}{\boldsymbol \phi(\statevec(t), \nldim)}
\newcommand{\nonlinearities}{[\statevec^{\ast 2}(t),...,\statevec^{\ast \nldim}(t)]}
\newcommand{\estpoint}{y}
\newcommand{\estvec}{\mathbf{\estpoint}}
\newcommand{\lMat}{\mathbf{A}}
\newcommand{\nlMat}{\mathbf{F}}
\newcommand{\sOf}{\mathbf{b}}
\newcommand{\oMat}{\mathbf{C}}
\newcommand{\oOf}{\mathbf{u}}
\newcommand{\eye}{\mathbf{I}}
\newcommand{\jMat}{\mathbf{J}}
\newcommand{\sCov}{\mathbf{\Gamma}}
\newcommand{\oCov}{\mathbf{R}}
\newcommand{\params}{\{\lMat,\nlMat,\sOf,\oMat,\oOf\}}
\newcommand{\sparseparams}{\{\lMat,\nlMat\}}
\newcommand{\nonsparseparams}{\{\sOf, \oMat, \oOf\}}
\newcommand{\setparams}{\theta}
\newcommand{\setsparse}{\theta_{s}}
\newcommand{\sparseMat}{\mathbf{\Theta}_s}
\newcommand{\setnonsparse}{\theta_{ns}}
\newcommand{\nll}{Q}
\newcommand{\msCov}{\mathbf{P}}
\newcommand{\msB}{\mathbf{V}}
\newcommand{\fwmu}{\hat{\boldsymbol \mu}}
\newcommand{\filmu}{\boldsymbol \mu}
\newcommand{\sindy}{SINDy\xspace}
\DeclareMathOperator*{\argmin}{arg\,min}
\definecolor{lightgray}{gray}{0.85}
\title{Modeling Latent Non-Linear Dynamical System over Time Series}
\author{
    Ren Fujiwara,
    Yasuko Matsubara,
    Yasushi Sakurai
}
\begin{document}

\maketitle

\begin{abstract}
We study the problem of modeling a non-linear dynamical system when given a time series by deriving equations directly from the data. Despite the fact that time series data are given as input, models for dynamics and estimation algorithms that incorporate long-term temporal dependencies are largely absent from existing studies. 
In this paper, we introduce a latent state to allow time-dependent modeling and formulate this problem as a dynamics estimation problem in latent states.
We face multiple technical challenges, including (1) modeling latent non-linear dynamics and (2) solving circular dependencies caused by the presence of latent states.
To tackle these challenging problems, we propose a new method, \textbf{La}tent \textbf{No}n-\textbf{L}inear \textbf{e}quation \textbf{m}odeling (\method),
that can model a latent non-linear dynamical system and a novel alternating minimization algorithm for effectively estimating latent states and model parameters. In addition, we introduce criteria to control model complexity without human intervention. 
Compared with the state-of-the-art model, \method achieves competitive performance for estimating dynamics while outperforming other methods in prediction. 
\end{abstract}

\begin{links}
\link{Code}{https://github.com/renfujiwara/LaNoLem}
\end{links}

\section{Introduction}
\label{sec: 01_intro}
The past few years have seen a surge in methods for modeling dynamics as mathematical expressions from observed data, encompassing both advancements in traditional evolutionary algorithms and novel machine learning-based techniques \cite{sr4,sr5,sr3,sr2,sr1}. In particular, the sparse identification of non-linear dynamics (SINDy) framework \cite{sindy} has emerged as a leading approach for parsimonious modeling, and its variants have been proposed over a number of years \cite{ssr, sindy_constsr3, sindy_trsr3, mio-sindy}.

Despite these advances, limited research considers the temporal dependencies between data, even though many data are given as time series data. 
The sequential nature of the data can help us address various challenges. 
For example, distinguishing between noise and non-linearity is challenging when modeling data as non-linear dynamics from noisy data.
However, it is easily identifiable by employing a fitting algorithm based on the time dependencies because non-linearity is time-dependent while noise is independent at each time point. 
In addition, the sequential nature of the data helps stabilize future predictions. 
Initial states are critical for prediction in non-linear models, and unstable initial states reduce prediction accuracy. 
This can also be resolved by tracking state transitions in the model and estimating initial states based on them.
In time series analysis, such problems are treated as modeling dynamics in the latent state, i.e., modeling dynamics where noise in the data is removed or compressed to a dimension lower than that of the data.
Intuitively, we wish to solve the following problem:

\textit{
Given a time series, how can we estimate the latent states and dynamics with mathematical expressions?}

This problem presents us with the following two challenges: (1) how to formulate a model in latent states and (2) how to estimate model parameters and latent states simultaneously. 
A crucial insight into many dynamical systems of interest is that the function representing the transition consists of only a few but various types of terms \cite{sindy}. 
In other words, 
transitions among latent states must be represented by the non-linear dynamical system consisting of various terms of the latent states, 
and the estimation algorithm must incorporate a sparsity of these terms.
The presence of latent states also causes the following circular dependency in the algorithm: good latent states require well-estimated models, and the latent states must be properly estimated to find a suitable model. 
Non-linear state transitions and parameter sparsity further complicate this circular dependency. Specifically, non-linear state transitions introduce challenges in estimating latent states, while sparsity introduces challenges in estimating parameters.

In this paper, we tackle the above challenging problem and propose a method called \textbf{La}tent \textbf{No}n-\textbf{L}inear \textbf{e}quation \textbf{m}odeling (\method). 
\method can estimate the latent states and dynamics with mathematical expressions from observed data.
Our model has two components: (a) a non-linear dynamical system consisting of various terms of latent states and (b) criteria based on the minimum description length (MDL) principle to determine our model complexity.
To overcome the challenges caused by this model (i.e., non-linear state transitions and parameter sparsity), we develop a novel minimization algorithm that alternately estimates latent states and model parameters. In particular, our algorithm can ensure a fast convergence rate for estimating sparse parameters.

In summary, our contributions are as follows:
\begin{itemize}
    \item We propose a brief model representing latent non-linear dynamics and criteria for evaluating the trade-off between the accuracy and complexity of the model.
    \item We develop an efficient algorithm for estimating latent states and model parameters simultaneously, considering non-linear state transitions and sparsity in the parameters.
    \item Compared to state-of-the-art methods on 71 chaotic benchmark datasets, our model achieves competitive performance for estimating dynamics while consistently outperforming state-of-the-art in prediction tasks. 
\end{itemize}


\section{Proposed Model}
\label{sec: 02_model}
\mypara{Latent non-linear dynamical system}
In our model, we capture the latent dynamics of time series involving non-linear phenomena. 
We begin with the assumption that there are two classes of values:
\begin{description}
    \item[$\statevec(t)$\textnormal{:}] Latent states, i.e., $\sdim$-dimensional latent states at time point $t$, ($\statevec(t) = \{\statepoint_1(t),...\statepoint_\sdim(t)\}$).
    \item[$\estvec(t)$\textnormal{:}] Estimated values, i.e., $\datadim$-dimensional values that can be observed at time point $t$, ($\estvec(t) = \{\estpoint_1(t),...\estpoint_\datadim(t)\}$).
\end{description}
Here, we can only observe the estimated values $\estvec(t)$ at time point $t$. $\statevec(t)$ is a hidden vector that evolves over time as a dynamical system. Consequently, we consider dynamical systems described by the following equations:
\begin{align}
    \nonumber
    \statevec(t+1) &= \lMat\statevec(t) + \nlMat\nlstatevec + \sOf, \\
    \nonumber
    \estvec(t) &= \oMat\statevec(t)+\oOf.
\end{align}
$\lMat$ and $\nlMat$ describe the dynamics of latent states $\statevec(t)$, which capture linear and non-linear dynamics. $\oMat$ shows the observation projection, which generates the estimated value $\estvec(t)$ at each time point $t$. 
We also refer to $\sOf$ as state offsets and $\oOf$ as observation offsets. 
$\nlstatevec$ indicates non-linearities in the latent state space. In summary, we have the following:
\begin{definition}{(Full parameter set of \method: $\setparams$)} Let $\setparams$
be a complete set of \method parameters, namely, $\setparams = \params$.
\end{definition}
The non-linearities $\nlstatevec$ require adaption to various dynamics. Therefore, in our model, we describe $\nlstatevec$ as follows:
\begin{align}
\nonumber
\nlstatevec &= \nonlinearities.
\end{align}
Here, $\nldim$ represents the order of \method, and higher polynomials are denoted as $\ast 2$, $\ast 3$, ..., $\ast \nldim$, where $\ast 2$ denotes the quadratic non-linearities in the state $\statevec(t)=\{\statepoint_1(t),...,\statepoint_\sdim(t)\}]$:
\begin{align}
\nonumber
\statevec^{\ast 2}(t) &= [\statepoint_1^2(t), \statepoint_1(t)\statepoint_2(t), ..., \statepoint_\sdim^2(t)].
\end{align}
The non-linearities $\nlstatevec$ indicate that our model can have many terms and make the model more complex than necessary.
However, for many dynamical systems of interest, the function consists of only a few but various types of terms.
Our method represents this by making $\sparseparams$ sparse.
Consequently, our goal can be defined as follows.

\begin{problem}
Given a time series $\data \in \mathbb{R}^{\datalen \times \datadim}$, estimate latent states and parameter set, i,e.,
\begin{itemize}
    \item \textbf{estimate} latent states $\{\statevec(t)\}_{t=1}^\datalen$.
    \item \textbf{identify} full parameter set $\setparams$, s.t., $\sparseparams$ are sparse.
\end{itemize}
\end{problem}

\mypara{Criteria for controlling model complexity}
In the \method model, deciding model complexity (i.e., the order of \method $\nldim$) is an important problem in terms of accurate modeling. However, obtaining any error is also unreliable because it is necessary to consider the sparsity of the model, i.e., that $\lMat$ and $\nlMat$ are sparse matrices, to evaluate the true complexity of the proposed method.
We use the minimum description length (MDL) principle \cite{mdl} to deal with this problem. Although important, the MDL principle is a model selection criterion based on lossless compression principles, which does not directly address our problem. Thus, we define a new coding scheme for the summarization of a given model. 
We introduce an intuitive coding scheme, which is based on lossless compression principles. 
In short, the goodness of the model can be roughly described as $Cost_T = Cost(M)+Cost(X|M)$, where $Cost(M)$ shows the cost of describing the model $M$, and $Cost(X|M)$ represents the cost of describing the data $\data$ given the model $M$. 

Fig. \ref{fig: overview} shows an overview of our \method model for time series $\data$. 
We aim to estimate all these parameters, assuming that only a few important terms govern the dynamics. However, optimizing all the parameters is extremely expensive and depends on the noise.
When this is the case, how can we formulate the parameter estimation? We provide the answer in the next section.
\begin{figure}[t]
    \centering
    \includegraphics[width=0.96\linewidth]{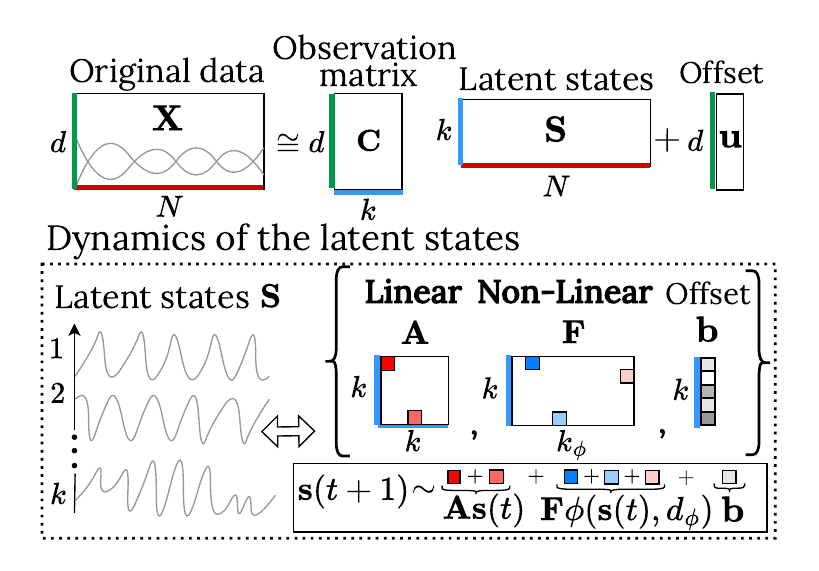}
    \vspace{-1.5em}
    \caption{
        Overview of \method: Given a data sequence $\data$, we extract latent states represented by a non-linear dynamical system, 
        where parameters for dynamics, i.e., $\lMat$ and $\nlMat$, are sparse.
    }
    \vspace{-1.5em}
    \label{fig: overview}
\end{figure}

\section{Optimization Algorithm}
\label{sec: 03_algorithm}
Given a data $\data$, we propose an algorithm to estimate:
\begin{itemize}
    \item the latent states $\hat{\statevec}(t) = \mathbb{E}[\statevec(t)], (t = 1, ..., \datalen)$;
    \item the governing parameter set $\setparams=\params$;
\end{itemize}
The goal of estimation is achieved by minimizing the negative likelihood of observed data. 
However, it is difficult to directly minimize the negative likelihood of incomplete data; we minimize the expected negative log-likelihood of the observation sequence.
Our overall optimization problem is
\begin{align}
    \label{eq:all_ob}
    \argmin_{\State, \setparams} -\nll(\data,\State,\setparams) + r(\lMat,\nlMat).
\end{align}
$\State$ is a set of latent states, i.e., $\State=\{\statevec(t)\}^\datalen_{t=1}$.
$r(\cdot)$ is the regularizer for penalizing non-zero solutions. 
In this paper, we use the elastic-net regularizer \cite{regularization} as $r(\cdot)$.
However, since the difference operation is expressed as $s(t + h) = s(t) + \Delta s(t)$ \cite{difference}, we must regularize that penalizes solutions that are not 1 in the diagonal component of $\lMat$. Therefore, we use the regularizer to differentiate the identity matrix I, i.e., $r(\mathbf{\Theta}) = \frac{1}{2}\lambda_2 ||\mathbf{\Theta} - \eye||^2_F + \lambda_1||\mathbf{\Theta} - \eye||_1$. 
$||\mathbf{\Theta}||_F$ represents the Frobenius norm of matrix $\mathbf{\Theta}$ and $||\mathbf{\Theta}||_1$ is the $l_1$ norm on every element of matrix $\mathbf{\Theta}$. 
Additionally, $\nll(\data,\State,\setparams)$ is as follows,
\begin{align}
    \nonumber
    \nll(\data, \State, \setparams) &= \mathbb{E}[-\sum^\datalen_{t=1}D(\datavec(t), \oMat\statevec(t)+\oOf, \oCov) - \frac{\datalen \log|\oCov|}{2}\\
    \nonumber
    &-\sum^{\datalen-1}_{t=1}D(\statevec(t+1), \lMat\statevec(t)+\nlMat\phi(\statevec(t),\nldim)+\sOf, \sCov) \\
    \label{eq:nll}
    &- \frac{(\datalen-1)\log|\sCov|}{2}],
\end{align}
\normalsize
where $D$ is the square of the Mahalanobis distance $D(x,y,\Sigma)=(x-y)^T\Sigma^{-1}(x-y)$. $\sCov$/$\oCov$ represents the covariance of Gaussian noises in states/observations.
\begin{algorithm}[t]
    \caption{\fitalgo ($\data$)}
    \label{alg: overview}
    \begin{algorithmic}[1]
    \STATE {\bf Input:} data $\data \in \mathbb{R}^{\datalen\times\datadim}$ \\
    \STATE {\bf Output:} (a) estimated latent states $\{\hat{\statevec}(t)\}_{t=1}^\datalen$ \\
    \ \ \ \ \ \ \ \ \ \ \ \ \ \ \
    (b) parameter set $\setparams=\params$
    \REPEAT
    \STATE /* \sfit*/ 
    \STATE Estimate latent space $\{\hat{\statevec}(t)\}_{t=1}^\datalen$ // Eq. \eqref{eq: fw_st}-\eqref{eq: bw_ed}
    \STATE Estimate the moments set $\mSet$ for \pfit \ // Eq. \eqref{eq: m_st}-\eqref{eq: m_ed}
    \STATE /* \pfit */
    \STATE $\setparams^{new}$ = \text{\pfit}($\{\hat{\statevec}(t)\}_{t=1}^\datalen,\mSet,\setparams) $
    \UNTIL{convergence;}
    \STATE \textbf{return} $\{\{\hat{\statevec}(t)\}_{t=1}^\datalen,\setparams\}$
    \end{algorithmic}
    \normalsize
\end{algorithm}

We must estimate latent states appropriately to find a suitable parameter set for $\data$. Simultaneously, a good latent state requires a well-estimated model. We escape this circular dependency by applying a novel alternating minimization.

Algorithm \ref{alg: overview} provides an overview of \fitalgo. 
Specifically, we propose an iterative method that alternately employs the following two sub-algorithms (\sfit and \pfit) until the cost function reaches a minimum value. Next, we describe each sub-algorithm in detail.
\subsection{\sfit}
The goal of \sfit is achieved by finding the marginal distributions for the latent variables conditional on the observation sequence. These inference problems can be solved efficiently using the sum-product message passing \cite{sum_product}.
In a linear setting, we can efficiently solve these inference problems with the Kalman filter in the context of LDS \cite{lds}. However, introducing transition models that depart from the linear Gaussian model leads to an intractable inference problem. 
Thus, we introduce one widely used approach to realize a Gaussian approximation by linearizing around the mean of the predicted distribution with a matrix of partial derivatives (the Jacobian), which gives rise to an extended Kalman filter \cite{ekf}. 
In \sfit, we use the Jacobian
$\jMat_{\statevec(t)}= \lMat + \frac{\partial \nlMat\nlstatevec}{\partial \statevec(t)}$.
This yields the following forward passing of the belief, as in LDS:
\begin{align}
    \label{eq: fw_st}
    \fwmu(t) &= \lMat\filmu(t-1)+\nlMat\boldsymbol \phi(\filmu(t-1), \nldim)+\sOf, \\
    \hat{\msCov}(t) &= \jMat_{\filmu(t-1)}\msCov(t-1)\jMat^T_{\filmu(t-1)}+\sCov, \\
    \mathbf{U}(t) &= \oMat \hat{\msCov}(t)\oMat^T+\oCov, \\
    \mathbf{K}(t) &= \hat{\msCov}(t)\oMat^T\mathbf{U}^{-1}(t),\\
    \filmu(t) &= \fwmu(t)+\mathbf{K}(t)(\datavec(t)-\oMat\fwmu(t)),\\
    \label{eq: fw_ed}
    \msCov(t) &= (\mathbf{I}-\mathbf{K}(t)\oMat)\hat{\msCov}(t).
\end{align}
We also obtain the backward passing equations:
\begin{align}
    \label{eq: bw_st}
    \hspace*{-1em} \msB(t) &= \msCov(t)\jMat^T_{\filmu(t)}\hat{\msCov}^{-1}(t+1), \\
    \hspace*{-1em} \hat{\statevec}(t) &= \filmu(t)+\msB(t)(\hat{\statevec}(t+1) - \fwmu(t+1)),\\
    \label{eq: bw_ed}
    \hspace*{-1em} \mathbf{W}(t) &= \msCov(t)+\msB(t)(\mathbf{W}(t+1)-\hat{\msCov}(t+1))\msB^T(t).
\end{align}
In this way, the optimal latent states $\{\hat{\statevec}(t)\}^\datalen_{t=1}$ are estimated. For \pfit, we require the following moments: 
\begin{align}
    \label{eq: m_st}
    \hspace*{-1em} \mathbb{E}[\statevec(t)] =& \hat{\statevec}(t), \\
    \hspace*{-1em} \mathbb{E}[\statevec(t)\statevec^T(t)] =& \mathbf{W}(t) + \hat{\statevec}(t)\hat{\statevec}^T(t), \\
    \hspace*{-1em} \mathbb{E}[\statevec(t+1)\statevec^T(t)] =& \mathbf{W}(t+1)\msB^T(t) + \hat{\statevec}(t+1)\hat{\statevec}^T(t).
\end{align}
To give our problem a \pfit-friendly form, we introduce a concatenated vector $\statevec_\phi(t) = [\statevec(t), \phi(\statevec(t), \nldim)]$
and concatenated matrix $\sparseMat =[\lMat, \nlMat]$.
We also require $\mathbb{E}[\statevec_\phi(t)]$. Here, we introduce the following moment generating function \cite{moment}.
\begin{definition}[Moment generating function:$M_X(i)$]
Given a vector $\mu$ and nonnegative definite matrix $\Sigma$,
the moment generating function of a random variable $X \sim \mathcal{N}(\mu, \Sigma)$ is a function $M_X(i)$ defined as
\begin{align}
    \nonumber
    M_X(i) &=\mathbb{E}[e^{i^TX}]
    (= e^{i^T\mu+\frac{1}{2}i^T\Sigma i}).
\end{align}
\end{definition}
\textit{Remark.} $M_X(i)$ is the exponential generating function of the moments of the probability distribution:
\begin{align}
    \label{eq: moment}
    \left. \mathbb{E}[X^n]=\frac{d^nM_X}{di^n} \right|_{i=0}.
\end{align}
Since $\statevec_\phi(t)$ is a higher polynomial of $\statevec(t)$, the moment $\mathbb{E}[\statevec_\phi(t)]$ can be regarded as a higher-order moment of a multivariate Gaussian distribution, and we can generate these values with Eq. \eqref{eq: moment}. 

A large state noise changes the system, i.e., a different system incorporating the noise should be estimated if the state noise is large. Therefore, we can assume that the high-order state noise is sufficiently small without loss of generality.
We use the following approximated moments:
\begin{align}
    \mathbb{E}[\statevec_\phi(t)\statevec_\phi^T(t)] &= \mathbb{E}[\statevec_\phi(t)]\mathbb{E}[\statevec_\phi(t)]^T, \\
    \label{eq: m_ed}
    \mathbb{E}[\statevec(t+1)\statevec_\phi^T(t)] &= \mathbb{E}[\statevec(t+1)]\mathbb{E}[\statevec_\phi(t)]^T.
\end{align}
\subsection{\pfit}
Once we have the estimated latent states, we can run the algorithm to determine the parameter set $\setparams$. 
The model dynamics parameters $\lMat$ and $\nlMat$ should include only a few important terms, i.e., they should be sparse matrices. However, simultaneously estimating these sparse matrices along with other parameters presents a considerable challenge.
Therefore, we split parameter set $\setparams$ into two subsets, i.e., $\setsparse = \sparseparams$ and $\setnonsparse = {\setparams \setminus \setsparse} = \nonsparseparams$, each of which corresponds to a sparse/non-sparse parameter set, and try to fit the parameter sets separately. 
Algorithm \ref{alg: pfit} shows the \pfit optimization process. In \pfit, a set of moments (Eq. \eqref{eq: m_st} - Eq. \eqref{eq: m_ed}) is defined as a moment set $\mSet$ as follows, which is used to estimate model parameters.
\begin{definition}[Moment set: $\mSet$]
    \par
    Let $\mSet$ be a set of moments, 
    \begin{align}
    \nonumber
    \mSet =& \{\mathbb{E}[\statevec(t)], \mathbb{E}[\statevec(t)\statevec^T(t)], \mathbb{E}[\statevec(t+1)\statevec^T(t)], \\
    & \mathbb{E}[\statevec_\phi(t)], \mathbb{E}[\statevec_\phi(t)\statevec_\phi^T(t)], \mathbb{E}[\statevec(t+1)\statevec_\phi^T(t)]\}^\datalen_{t=1}.
    \end{align}
\end{definition}
\mypara{Fitting sparse parameter set}
In each iteration, we need to minimize Eq. \eqref{eq:all_ob} with respect to $\sparseMat$, which is equivalent to minimizing the following function $f(\sparseMat)$:
\begin{align}
    \label{eq: s_obs}
    \hspace{-0.5em}
    \nonumber
    f(\sparseMat) = \mathbb{E}[\sum^{\datalen-1}_{t=1}D(\statevec(t+1), \sparseMat\statevec_\phi(t)+\sOf, \sCov)] \\ + \frac{1}{2}\lambda_2||\sparseMat -\eye||^2_F + \lambda_1||\sparseMat-\eye||_1.
\end{align}
As we can see, $f(\sparseMat)$ is convex but non-differentiable, and we can easily decompose $f(\sparseMat)$ into two parts: $f(\sparseMat) = g(\sparseMat) + h(\sparseMat)$, as shown in Eq. \eqref{eq: s_obs}. Since $g(\sparseMat)(=\mathbb{E}[\sum^{\datalen-1}_{t=1}D(\statevec(t+1), \sparseMat\statevec_\phi(t)+\sOf, \sCov)] + \frac{1}{2}\lambda_2||\sparseMat-\eye||^2_F)$ is differentiable, we can adopt the generalized gradient descent algorithm to minimize $f(\sparseMat)$. The update rule is: $\sparseMat(i+1) = \text{prox}_{\alpha\lambda}(\sparseMat(i)-\alpha  \nabla g(\sparseMat(i)) - \eye)$ where $\alpha$ is the step size at iteration $i$ and the proximal function $\text{prox}_{\alpha\lambda} (\sparseMat)$ is defined as the soft-threshold proximal operator $Th_{\alpha\lambda}(\beta)$, which has the following closed-form solution:
\begin{align}
    \nonumber
    Th_{\alpha\lambda_1}(\beta)&=\begin{cases}\beta-\alpha\lambda_1\ & \text{if}\; \beta<\alpha\lambda_1, \\0& \text{if}\; -\alpha\lambda_1 \leq \beta \leq \alpha\lambda_1, \\ \beta+\alpha\lambda_1& \text{if}\;\beta< - \alpha\lambda_1.\end{cases}
\end{align}
Also, the gradient $\nabla g(\sparseMat)$ is as follows:
\begin{align}
    \nonumber
    \nabla g(\sparseMat)=&\sCov^{-1}(\sum^{\datalen-1}_{t=1}
    \sparseMat\mathbb{E}[\statevec_\phi(t)\statevec_\phi^T(t)] + \sOf\mathbb{E}[\statevec_\phi^T(t)]\\
    &
    \label{eq: grad}
    -\mathbb{E}[\statevec(t+1)\statevec_\phi^T(t)]) + \lambda_2(\sparseMat-\eye).
\end{align}
There remains the question of how to determine the step size $\alpha$. The appropriate step size is important in the gradient method. However, it is difficult to determine the step size in advance e.g., by a grid search, since each \sfit updates the latent states. The appropriate step size in a linear setting has been discussed in \cite{rlds}, and we can extend this to our case.
Proposition \ref{prop: stepsize} allows us to select the step size while assuring its fast convergence rate.
\begin{proposition}[]
\label{prop: stepsize}
Generalized gradient descent with a fixed step size $\alpha \leq 1/(\|\sCov^{-1}\|_F \cdot \| \sum^{\datalen-1}_{t=1}
    \mathbb{E}[\statevec_\phi(t)\statevec_\phi^T(t)]\|_F + \lambda_2)$ for minimizing has a convergence rate $O(1/i)$, where $i$ is the number of iterations.
\end{proposition}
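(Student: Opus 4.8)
The plan is to recognize Proposition~\ref{prop: stepsize} as an instance of the classical convergence theory for generalized (proximal) gradient descent applied to a composite convex objective $f(\sparseMat) = g(\sparseMat) + h(\sparseMat)$, where $g$ is convex and differentiable with Lipschitz-continuous gradient and $h(\sparseMat)=\lambda_1\|\sparseMat-\eye\|_1$ is convex but non-smooth. The standard result states that if $\nabla g$ is $L$-Lipschitz, then proximal gradient descent with any fixed step size $\alpha \le 1/L$ satisfies $f(\sparseMat(i)) - f(\sparseMat^\ast) \le \|\sparseMat(0)-\sparseMat^\ast\|_F^2/(2\alpha i)$, which is $O(1/i)$. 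Hence the entire task reduces to (i) confirming convexity, and (ii) exhibiting a Lipschitz constant $L$ for $\nabla g$ equal to the reciprocal of the stated step-size bound; the desired rate then follows immediately.

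First I would verify convexity. The Mahalanobis term $\mathbb{E}[\sum_t D(\statevec(t+1),\sparseMat\statevec_\phi(t)+\sOf,\sCov)]$ is a quadratic form in $\sparseMat$ whose Hessian is governed by $\sCov^{-1}$ and the second-moment matrix $\sum_t \mathbb{E}[\statevec_\phi(t)\statevec_\phi^T(t)]$; since $\sCov^{-1}$ is positive definite and each second-moment matrix is positive semidefinite, $g$ is convex, and adding the convex ridge term $\tfrac{1}{2}\lambda_2\|\sparseMat-\eye\|_F^2$ preserves convexity. The $\ell_1$ term $h$ is convex as well, so $f$ is a legitimate composite convex objective and the proximal framework applies with the soft-threshold operator already stated.

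The core step is the Lipschitz estimate, which I expect to be the main obstacle because it requires carefully handling the two-sided matrix multiplication in $\nabla g$. Writing $\Sigma := \sum_{t=1}^{\datalen-1}\mathbb{E}[\statevec_\phi(t)\statevec_\phi^T(t)]$ and using the closed form in Eq.~\eqref{eq: grad}, the terms involving $\sOf$ and $\mathbb{E}[\statevec(t+1)\statevec_\phi^T(t)]$ are constant in $\sparseMat$ and cancel upon differencing, leaving for any $\sparseMat_1,\sparseMat_2$
\begin{align}
\nonumber
\nabla g(\sparseMat_1)-\nabla g(\sparseMat_2) = \sCov^{-1}(\sparseMat_1-\sparseMat_2)\Sigma + \lambda_2(\sparseMat_1-\sparseMat_2).
\end{align}
Applying the triangle inequality together with submultiplicativity of the Frobenius norm, namely $\|\sCov^{-1}(\sparseMat_1-\sparseMat_2)\Sigma\|_F \le \|\sCov^{-1}\|_F\|\sparseMat_1-\sparseMat_2\|_F\|\Sigma\|_F$, yields
\begin{align}
\nonumber
\|\nabla g(\sparseMat_1)-\nabla g(\sparseMat_2)\|_F \le \big(\|\sCov^{-1}\|_F\|\Sigma\|_F + \lambda_2\big)\,\|\sparseMat_1-\sparseMat_2\|_F.
\end{align}
Thus $\nabla g$ is $L$-Lipschitz with $L = \|\sCov^{-1}\|_F\|\Sigma\|_F + \lambda_2$, exactly the reciprocal of the proposition's step-size bound.

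Finally I would combine the pieces: since $\alpha \le 1/L$, the classical proximal-gradient bound gives $f(\sparseMat(i))-f(\sparseMat^\ast)=O(1/i)$, completing the proof. A minor point worth noting is that the Frobenius norm only upper-bounds the spectral norm, so this $L$ is conservative relative to the tightest Lipschitz constant $\|\sCov^{-1}\|_2\|\Sigma\|_2+\lambda_2$; this makes the stated step size safe (it remains below the reciprocal of the tight constant) at the cost of being slightly smaller than strictly necessary. The only real subtlety is invoking submultiplicativity correctly for the left- and right-multiplied factors, which is why I flag the Lipschitz estimate as the crux of the argument.
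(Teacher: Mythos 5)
Your proposal is correct and follows essentially the same route as the paper's proof in Appendix~C: decompose $f(\sparseMat)=g(\sparseMat)+\lambda_1\|\sparseMat-\eye\|_1$, bound the Lipschitz constant of $\nabla g$ by $\|\sCov^{-1}\|_F\cdot\|\sum^{\datalen-1}_{t=1}\mathbb{E}[\statevec_\phi(t)\statevec_\phi^T(t)]\|_F+\lambda_2$ via submultiplicativity of the Frobenius norm, and then invoke the classical $O(1/i)$ proximal-gradient bound $f(\sparseMat(i))-f(\sparseMat^{opt})\leq \|\sparseMat(0)-\sparseMat^{opt}\|_F^2/(2\alpha i)$. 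If anything, your write-up is slightly more careful than the paper's, which writes the gradient-difference bound with an equality where the triangle inequality is actually needed, and which leaves the convexity check implicit.
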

\begin{proof}
Please see Appendix C.
\end{proof}

\mypara{Fitting non-sparse parameter set}
To estimate the non-sparse parameter, using the derivatives of \eqref{eq:all_ob} with respect to each of the components of $\setnonsparse$ and setting them at zero yields (i.e., $\frac{d\nll}{d\setnonsparse}=0$), and here we omit the details provided in Appendix D.
\begin{algorithm}[t]
    \caption{\pfit ($\{\hat{\statevec}(t)\}_{t=1}^\datalen, \mSet, \setparams$)}
    \label{alg: pfit}
    \begin{algorithmic}[1]
    \STATE {\bf Input:} (a) estimated latent states $\{\hat{\statevec}(t)\}_{t=1}^\datalen$ \\
    \ \ \ \ \ \ \ \ \ \ \ \
    (b) moments set
    $\mSet$ \\
    \ \ \ \ \ \ \ \ \ \ \ \
    (c) parameter set
    $\setparams = \{\setsparse, \setnonsparse\}$\\
    \STATE {\bf Output:} new parameter set $\setparams^{new} = \{\setsparse^{new}, \setnonsparse^{new}\}$
    \STATE $\sparseMat = [\lMat,\nlMat]$
    \STATE /* Compute the fixed stepsize $\alpha$ */
    \STATE $\alpha=1/(\|\sCov^{-1}\|_F \cdot \| \sum^{\datalen-1}_{t=1}
    \mathbb{E}[\statevec_\phi(t)\statevec_\phi^T(t)]\|_F + \lambda_2)$
    \REPEAT
    \STATE Compute gradient $\nabla g(\sparseMat)$ // Eq. \eqref{eq: grad}
    \STATE $\sparseMat = Th_{\alpha\lambda}(\sparseMat-\alpha \nabla g(\sparseMat))$
    \UNTIL{convergence}
    \STATE $\setsparse^{new} \gets \sparseMat$
    \STATE /* Fit other parameters (See details in Appendix D)*/
    \STATE $\setnonsparse^{new} = \argmin_{\setnonsparse} \nll(\data, \setsparse^{new}, \setnonsparse)$
    \STATE \textbf{return} $\{\setsparse^{new}, \setnonsparse^{new}\}$
    \end{algorithmic}
    \normalsize
\end{algorithm}

\section{Experimental Results}
\label{sec: 04_exp}
\begin{figure*}[t]
    \includegraphics[width=\linewidth]{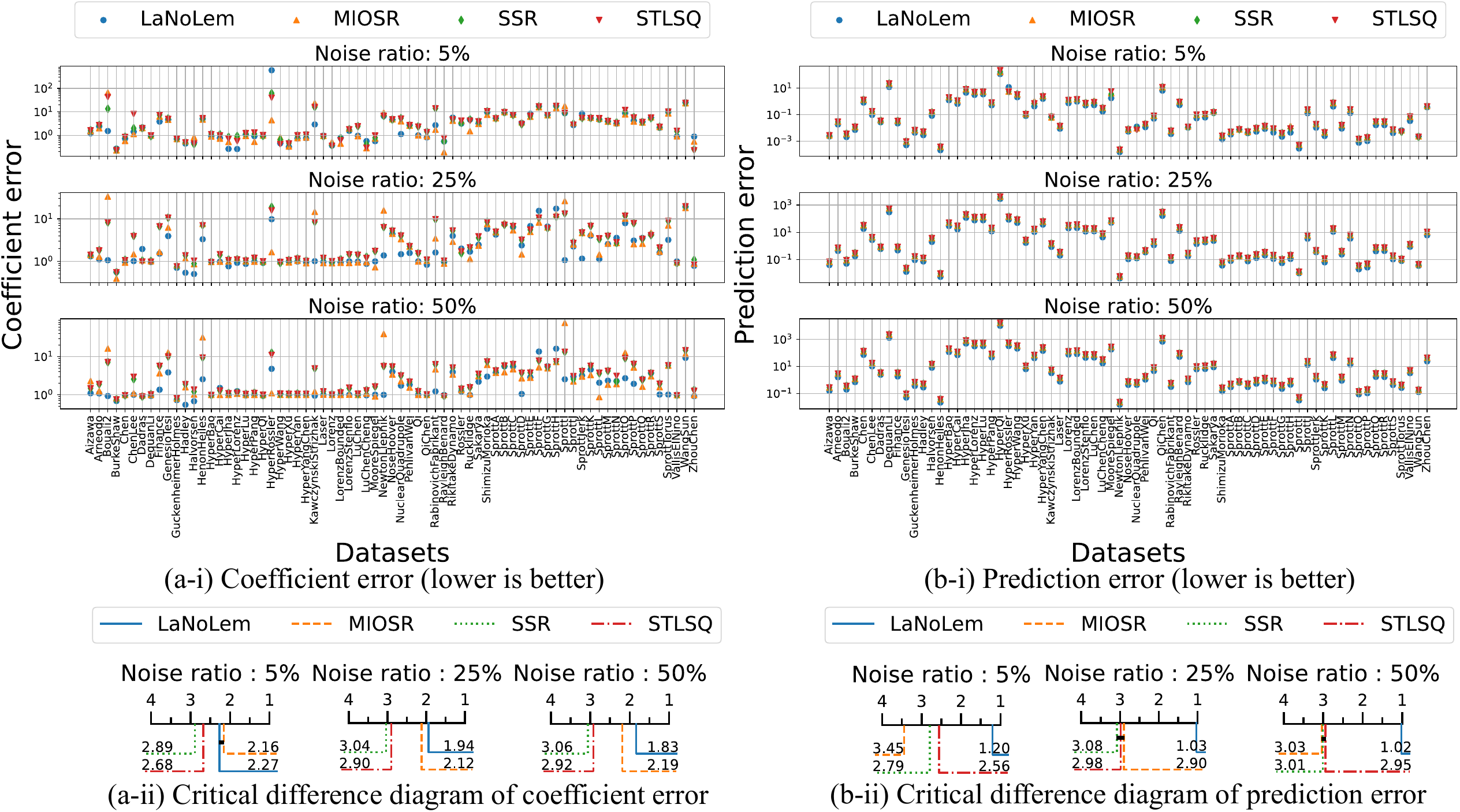}
    \vspace{-1.5em}
    \caption{Accuracy and robustness for dysts dataset: (a) \method achieves competitive performance for coefficient error.
    (b) \method also consistently achieved the lowest prediction error.}
    \label{fig: exp_syn}
    \vspace{-1.0em}
\end{figure*}
In this section, we run experiments on synthetic data where there are ground truth systems to evaluate the accuracy and robustness of our method.
We compare \method to three state-of-the-art baselines to measure accuracy and robustness, and we show the importance of latent states for them.
\subsection{Experimental Setup}
\mypara{Datasets} We use synthetic data obtained from dysts database \cite{dysts}, which provides data, equations, and dynamical properties for chaotic systems exhibiting strange attractors and coming from disparate scientific fields. In our experiments, we consider 71 systems representing ordinary differential equations (ODEs) with polynomial non-linearities that have no more than four degrees. 
Most importantly, this allows us to use the true governing equations to evaluate the model performance.

\mypara{Evaluation Metrics}
We use normalized coefficient error\footnote{$\text{Coefficient error} = \frac{\|\text{True coefficients}-\text{Learned coefficients}\|_2}{\|\text{True coefficients}\|_2}$}, 
which simply measures the normalized Euclidean distance between the true and the learned coefficients.
The coefficients are parameters for representing dynamics (e.g., $\lMat$ and $\nlMat$ are the coefficients in our method). The lower the coefficient error, the higher the modeling accuracy. We also use mean squared error (MSE) as the prediction error.

\mypara{Baseline Methods}
We compared our approach with the following methods: MIOSR \cite{mio-sindy}, SSR \cite{ssr}, and STLSQ \cite{sindy}, which are optimization algorithms in the SINDy framework.

The experimental settings are detailed in Appendix E, which contains a detailed description of the experimental conditions and hyperparameters used in our study.

\subsection{Accuracy and Robustness}
We discuss the accuracy of \method in estimating the non-linear dynamics of data and prediction.
In this experiment, we answer the following questions: How accurately and robustly does our method estimate a system and generate future values from noisy data, namely detect the appropriate coefficients (i.e., $\lMat$ and $\nlMat$) of the governing equation and predict one step ahead?
We vary the noise ratio\footnote{This represents the ratio of additive Gaussian noise scaled to a certain percentage of the $\ell_2$ norm of each dataset, i.e., $\text{noise ratio (\%)}=\frac{\text{$\ell_2$ norm of noise}}{\text{$\ell_2$ norm of data}}*100$.}, 
each with additive Gaussian noise, when varying random seeds in each experiment. We evaluate the quality of \method in estimating the coefficients and prediction when we set the noise ratio at 5\%, 25\%, and 50\%. 
Fig. \ref{fig: exp_syn} shows the summary of the average errors and the critical difference diagram for estimating and predicting each noise ratio. The critical difference diagrams are based on the Wilcoxon-Holm method \cite{cd_diagram}, where methods that are not connected by a bold line are significantly different as regards average rank.
\begin{table}[t]
\hspace{-1.0em}
\vspace{-1.0em}
\centering
\normalsize
\caption{$1^{\text{st}}$ Count in dysts dataset (higher is better).}
\label{table: win_count}
\vspace{0.5em}
\scalebox{0.9}
{
\begin{tabular}{lcccccc}
\toprule
\large \textbf{Measures} & \multicolumn{3}{c}{\large \textbf{Coefficient error}} & \multicolumn{3}{c}{\textbf{Prediction error}}\\
\cmidrule(lr){1-1} \cmidrule(lr){2-4} \cmidrule(lr){5-7}
\large \textbf{Noise ratio} & 5\% & 25\% & 50\%  & 5\% & 25\%& 50\%\\
\midrule
\large \textbf{STLSQ} & 6 & 4 & 3 &\underline{4} & 0 & 0\\
\large \textbf{SSR} & 6 & 5 & 0 & 0 & 0 & 0\\
\large \textbf{MIOSR} & \textbf{33} & \underline{25} & \underline{26} & 1 & 0  & 0\\
\midrule
\large \textbf{\method} & \underline{26} & \textbf{37} & \textbf{42} & \textbf{66} & \textbf{71} & \textbf{71}\\
\bottomrule
\end{tabular}
}
\vspace{-1.0em}
\end{table}
Table \ref{table: win_count} also shows the win count in the 71 datasets.
The results presented in the figures and tables demonstrate the effectiveness of \method. 
At a low noise ratio (5\%), MIOSR is competitive with the \method in coefficient error. Fig. \ref{fig: exp_syn} (a-ii) also demonstrate that \method and MIOSR aren't significantly different because they are connected by a bold line at the low noise.
However, as the noise level increases, the accuracy of the baselines decreases significantly, while the accuracy of \method decreases less than with the other methods.
Our method also consistently achieves the best prediction error.
\method achieved low coefficient error because it can filter out background noise when estimating latent states and systems using the temporal dependency of the data.
On the other hand, other methods treat time series data as split samples and cannot successfully separate noise and non-linearity. As a result, they overfit the noise and reduce the accuracy. Furthermore, because the introduction of latent states provides appropriate initial conditions, \method also provides stable prediction, which contributes significantly to consistent and highly accurate prediction.
We also show the numerical results, including statistical information (i.e., means and standard deviations), in Appendix G.

\mypara{Ablation study} In Appendix F, we prepare a limited version of the proposed method and compare it with the original \method for a more detailed evaluation.

\section{Case Studies}
\label{sec:05_app}
In this section, we describe the performance of \method using synthetic and real datasets. 
This section is designed to answer the following questions:
\begin{description}
    \item[\textit{Q1. Effectiveness}:] What are the advantages of introducing non-linear equations in latent states?
    \item[\textit{Q2. Practicality}:] How well does \method obtain meaningful insights from real-world datasets?
\end{description}
Through these questions, we show how latent states can be useful in various problems. Note that existing methods without latent states, such as the SINDy framework, can't be applied directly to scenarios in these questions.

\subsubsection{\textit{Q1. Effectiveness}}
Here, we discuss the effectiveness of \method in interpolating missing values, a typical issue addressed in dynamical systems in latent states \cite{dcmf, dynammo}. We show the advantages of describing the latent state with non-linear equations by comparison with rLDS \cite{rlds}, a linear dynamic system that introduces regularization. 
Fig. \ref{fig: q1_preview} shows results for the Halvorsen attractor \cite{halvorsen} when we set the noise ratio at 50\%. Fig. \ref{fig: q1_preview} (i) shows the results for missing value interpolation using the \method and rLDS for noisy data.
The intervals to be interpolated are each represented by a solid colored line, and the given input, except for the complementary interval, is represented by a gray line. Also, the solid orange line is noise-free (i.e., ground truth). \method (solid green line) completes a trajectory similar to the ground truth, whereas rLDS shows a completely different trajectory (solid red line). rLDS and \method introduce latent states, but rLDS is incapable of handling non-linear dynamics.

Our method can also effectively estimate dynamics even when the given data contains missing values. Each row of the heat maps in Fig. \ref{fig: q1_preview} (ii) and (iii) corresponds to $\frac{dx}{dt}$, $\frac{dy}{dt}$, and $\frac{dz}{dt}$. 
As shown in Fig. \ref{fig: q1_preview} (ii), the dynamics of Halvorsen have linear terms $\{x, y, z\}$ and non-linear terms $\{x^2, y^2, z^2\}$.
For example, the first row of the heatmap corresponds to $\frac{dx}{dt}$. This heatmap shows that $\frac{dx}{dt}$ contains the terms $\{x,y,z,y^2\}$. The same is true for the other rows.
All these results show that \method can estimate accurate and meaningful dynamics.
\begin{figure}[t]
    \centering
    \includegraphics[width=\linewidth]{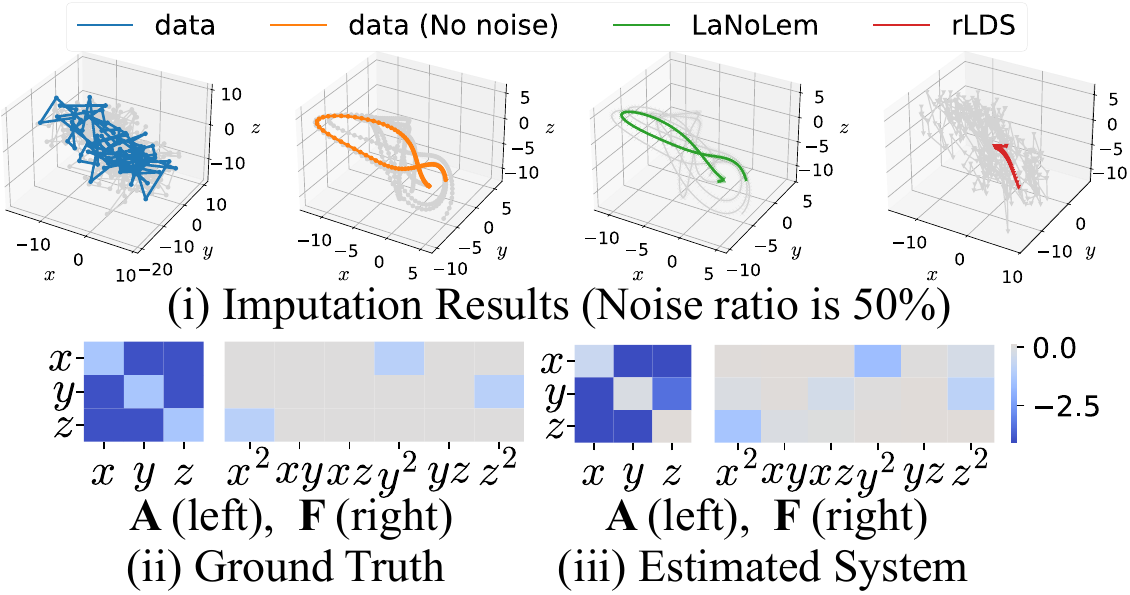} 
    \vspace{-2.0em}
    \caption{Effectiveness of \method: \method can accurately estimate the system and interpolate missing trajectories even when those are noisy and missing data.}
    \label{fig: q1_preview}
    \centering
    \includegraphics[width=\linewidth]{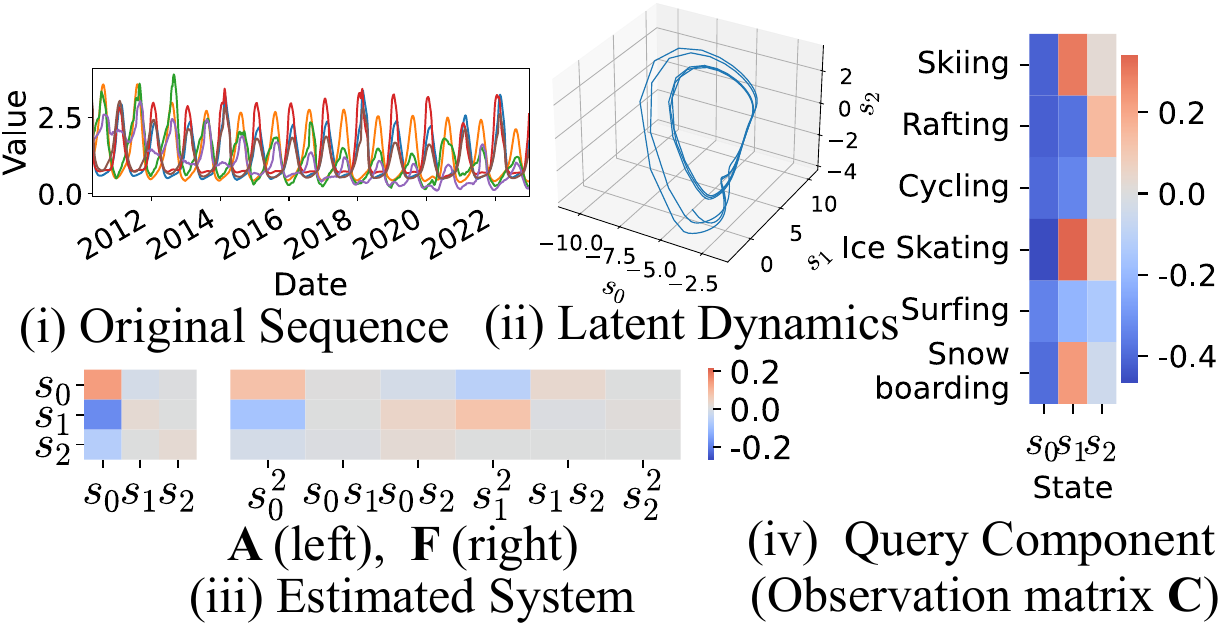}
    \vspace{-2.0em}
    \caption{Practicality of \method: \method can estimate (ii) latent dynamics, (iii) the system, and (iv) query components in (i) Google Trends data, which consist of the search volumes for six outdoor-related keywords.}
    \label{fig: q2_preview}
    \vspace{-1.5em}
\end{figure}

\subsubsection{\textit{Q2. Practicality}}
In this section, we present each example using the following real dataset. 
\begin{description}
    \item[Web activity data:] The dataset contains web-search counts related to outdoor query sets collected over twelve years from 2010 to 2022. 
    \vspace{-1.0em}
\end{description}
Here, we show what features \method can extract from real-world data sets.
\method allows flexible modeling by introducing latent states. As a result, our method separates noise and non-linearity well. In addition,
our method can estimate the dynamics in low-dimensional latent states (i.e., $\sdim < \datadim$) that existing studies cannot estimate. 
Modeling in low-dimensional latent states is important because it can be applied to various time series analyses (e.g., forecasting \cite{relinet, orbitmap}).
Fig. \ref{fig: q2_preview} shows our results regarding search volume data. 
Fig. \ref{fig: q2_preview} (i) shows the original data. \method can discover (ii) the latent states and (iii) the system, with which it factorizes the data into (iv) query factors, where six queries are represented as three latent states. 
We can interpret the circumstances of web activities from the connection between the latent states and query factors. 
For example, 
skiing, ice skating, and snowboarding are each assigned a positive weight with respect to state $s_1$, while cycling, surfing, and rafting are each assigned a negative weight.
This agrees with our intuition: skiing, ice skating, and snowboarding are winter activities, while the others are summer activities. Furthermore, Fig \ref{fig: q2_preview} (ii)  shows the estimated latent state. This is generated from the system in Fig \ref{fig: q2_preview} (iii), and the trajectory seems to have a limit cycle. This indicates the latent periodicity of each activity.

\section{Scope and Limitation}
\label{sec: 06_sl}
\mypara{Class of mathematical expression}
Our method deals with polynomials, which is a narrower class of formulas than related works. 
However, the number of non-linear models that can be expressed in polynomial form is much larger than one might imagine. For example, according to \cite{chaos_bench}, there are 71 non-linear models that polynomials can describe.
Also, replacing our model would make it possible to deal with a broader class of formulas.
However, it is computationally expensive and requires a bolder approximation for the expected value.

\mypara{Initialization} 
Our algorithm does not guarantee convergence to the global optimum, so it is sensitive to initial values. 
For example, in real datasets, our method is initialized using singular value decomposition (SVD). Setting initial values to improve the accuracy of our method is an interesting problem, but it is outside the scope of this paper.

\section{Related Work}
\label{sec: 07_related}
In this section, we briefly describe investigations related to this research.
Table \ref{table: capability} shows the relative advantages of our method, and only \method meets all the requirements. We separate the details of previous studies into two categories: modeling non-linear dynamics and time series analysis.

\mypara{Modeling non-linear dynamics} 
Estimating mathematical expressions from data represents a significant challenge in a wide range of diverse areas of science and engineering. 
A seminal work, \sindy, has been published on sparse regression methods for system identification \cite{sindy}. 
In that study, the model was estimated using sequentially thresholded least squares (STLSQ). More recently, representative algorithms for the \sindy framework, namely stepwise sparse regression (SSR) \cite{ssr} and mixed-integer optimized sparse regression (MIOSR) \cite{mio-sindy} have been proposed. 
These methods are effective because they provide interpretable models (systems) for various types of data. However, none of these methods are intended for estimating latent states and dynamics; therefore, they have lower model estimation for noisy data. 
Moreover, extracting interpretable components from the data and representing their dynamics is impossible. 
Symbolic regression (SR) can identify the tractable formula, i.e., $y=f(x)$, that best fits a given data pair $(x,y)$ \cite{sr1,sr2,sr3,sr4,sr5}. 
These methods can accurately estimate the non-linear relationship between given variables.
However, these methods cannot extract latent states and estimate their latent dynamics from multiple time series. Moreover, they need target variables (i.e., derivatives) when modeling dynamics such as ODEs. Unlike our methods and SINDy frameworks, 
these methods must address noise amplification when calculating the derivative value from the noisy data.
Therefore, these methods cannot be applied directly to the problem of estimating models from time series, as discussed in this study.

\mypara{Time series analysis}
Linear dynamical systems (LDS) \cite{lds} utilize latent space to capture temporal dependencies and learn latent dynamics. They have been applied to various analytical tasks, including forecasting, clustering, and pattern mining \cite{dynammo,plif,dcmf,ssm,netdyna,lds-cl}. Switching linear dynamical systems (SLDS) \cite{slds, slds2} enhance this framework by incorporating both discrete states that represent different motion modes and continuous states that describe the dynamics of each mode. This allows the representation of more complex time series. 
The regularized LDS model proposed in \cite{rlds} aims to learn an LDS model with a low-rank transition matrix from a limited number of sequences. Compared with regular LDS models, regularized LDS is able to find the essential dimensions of hidden states and prevent overfitting problems in advance.
While effective, these approaches typically assume state transitions governed by linear equations.
Estimating non-linear dynamical systems has also been attracting the attention of many researchers \cite{nlds_em,nlds_opt,nlds-opt2,nlds-opt3,nlds-opt4}. However, none of these non-linear models focus on modeling non-linear dynamics as mathematical expressions.






\newcommand*\rot{\rotatebox{90}}

\newcommand*\OK{\ding{51}}
\newcommand*\NO{-}
\newcommand{\SOME}{some}

\begin{table}[t]
\centering
\caption{
Capabilities of approaches.
}
\vspace{-0.5em}
\label{table: capability}
\scalebox{0.75}{
\hspace{-1.0em}
\begin{tabular}{l|c|c|c|c}
\toprule
&
LDS/++ &
SR/++&
\sindy/++ &
\textbf{\method} \\

\midrule
\rowcolor{lightgray}
Non-linear equation &\NO &\OK &\OK &\OK\\
Sparse Term &\NO &\SOME &\OK &\OK\\
\rowcolor{lightgray}
Robustness to differentiation &\NO &\NO &\OK &\OK\\
Modeling latent dynamics  &\OK &\NO &\NO &\OK\\

%

%
%

%
\bottomrule
\end{tabular}
}
\normalsize
\vspace{-1.0em}
\end{table} 

\section{Conclusion}
\label{sec: 08_conclusion}
This paper presented \method, an intuitive method for estimating both latent states and non-linear dynamics. 
Our main idea is to introduce latent states to allow the estimation of dynamics that consider the temporal dependencies of the data.
The important point is that the latent non-linear dynamical system and various dynamical systems should consist of fewer terms than the space of possible functions. 
Therefore, we propose a non-linear dynamical system consisting of latent states and various non-linear terms of states. We also propose new criteria to control its complexity. Finally, we develop a new alternating minimization algorithm to estimate our model effectively.
We also demonstrated the practicality and effectiveness of \method on various types of time series.
We can highlight the following key results:
\begin{itemize}
    \item We have introduced an effective model that represents latent non-linear dynamics and criteria for evaluating the trade-offs between model accuracy and complexity.
    \item We presented an efficient algorithm capable of simultaneously estimating latent states and model parameters, considering non-linear state transitions and 
    sparsity in the parameters.
    \item When benchmarked against state-of-the-art baselines, our model demonstrates competitive performance in estimating dynamics and maintains the leading performance in prediction tasks.
\end{itemize}

\section*{Acknowledgements}
This work was partly supported by
JST CREST JPMJCR23M3,
JSPS KAKENHI Grant-in-Aid for Scientific Research Number
JP24KJ1618.    

\bibliography{
BIB/icml24,
BIB/si,
BIB/time_series,
BIB/data}
\clearpage
\appendix
\section*{Appendix}
\section{Symbols and Definitions}
\begin{table}[H]
\centering
\caption{Symbols and definitions.}
\label{table: define}
\scalebox{0.80}{
\begin{tabular}{l|l}
\toprule
Symbol & Definition \\
\midrule
${\datalen}$
    & Number of time points\\
${\datadim}$
    & Number of dimensions \\
${\sdim}$
    & Number of state dimensions \\
${\nldim}$
    & Order of \method\\
${\nlsdim}$
    & Number of non-linear state dimensions\\ 
${\data}$
    & $d$-dimentional time series, i.e., $\data = \{\datavec(1), . . . , \datavec(\datalen)\}$ \\
${\datavec(t)}$
    & $d$-dimensional vector at time point $t$, i.e., $\datavec(t) = \{\datapoint(t)\}^{\datadim}_{i=1}$ \\
\midrule
${\statevec(t)}$
    & Latent states at time point $t$, i.e., $\statevec(t) = \{\statepoint(t)\}^{\sdim}_{i=1}$ \\
${\nlstatevec}$
    & Non-Linearities, i.e., $\nonlinearities$\\
${\estvec(t)}$
    & Estimated values at time point $t$, i.e., $\estvec(t) = \{\estpoint(t)\}^{\datadim}_{i=1}$ \\
\midrule
${\lMat}$
    & Linear transition matrix, ${\sdim}\times{\sdim}$\\
${\nlMat}$
    & Non-linear tansition matrix, ${\sdim}\times{\nlsdim}$\\
${\sOf}$
    & $\sdim$-dimensional offset at state space\\
${\oMat}$
    & Observation matrix, ${\datadim}\times{\sdim}$\\
${\oOf}$
    & $\datadim$-dimensional observation offset\\
\bottomrule
\end{tabular}
}
\normalsize
\end{table}
Table \ref{table: define} lists the symbols and definitions used in this paper.
\section{Minimum Description Length (MDL) Cost}
\label{apseq: mdl}
Our method defines the minimum description length (MDL) cost as follows.

\begin{align}
    \underbrace{<\data;\setparams>}_{\text{Total MDL cost}} = \underbrace{<\data|\setparams>}_{\text{Data encoding cost}} + \underbrace{<\setparams>}_{\text{Model description cost}}.
\end{align}

Data encoding cost, $<\data|\cdot>$, measures how well a given model compresses original data, for which
the Huffman coding scheme \cite{mdl} assigns a number of bits to each element in $\data$. Letting
$\hat{\datapoint} \in \hat{\data}$ be reconstructed data, the data encoding cost is represented by the negative log-likelihood under a Gaussian distribution with mean $\mu$ and variance $\sigma^2$ over errors, as follows.

\begin{align}
    <\data|\setparams>=\sum_{\datapoint \in \data} -\log_2 P_{\mu,\sigma}(\datapoint-\hat{\datapoint}).
\end{align}

Model description cost, $<\setparams>$, measures the complexity of a given model, which is defined as follows.
\begin{align}
    \nonumber
    <\setparams> = <\lMat> + <\nlMat> + <\sOf> + <\oMat> +<\oOf>.
\end{align}
Each cost measures the complexity based on the following equations.
\begin{align}
    \nonumber
    <\lMat> &= |\lMat|\cdot(\log(\sdim)+\log(\sdim)+c_F), \\
    \nonumber
    <\nlMat> &= |\nlMat|\cdot(\log(\sdim)+\log(\nlsdim)+c_F),\\
    \nonumber
    <\sOf> &= |\sOf|\cdot(\log(\sdim)+c_F), \\
    \nonumber
    <\oMat> &= \sdim\datadim\cdot(\log(\sdim)+\log(\datadim)+c_F),\\
    \nonumber
    <\oOf> &= \datadim\cdot(\log(\datadim)+c_F),
\end{align}

where $|\cdot|$ shows the number of non-zero elements in a given vector/matrix, and $c_F$ is the float point cost\footnote{We used $c_F=32$ bits.}. 
\section{Proof of Proposition 1}
\begin{proof}
\label{apseq: proof}
Given that the gradient of $g(\sparseMat)$ is Eq. \eqref{eq: grad} and since the Frobenius norm is submultiplicative, the following inequality holds:
\begin{align}
    \nonumber
    &\|\nabla g(X)-\nabla g(Y )\|_F, \\
    \nonumber
    =&\|\sCov^{-1}\sum^{\datalen-1}_{t=1} (X-Y)\mathbb{E}[\statevec_\phi(t)\statevec_\phi^T(t)]\|_F + \lambda_2||X-Y||_F,
    \\
    \nonumber
    \leq& (\|\sCov^{-1}\|_F\cdot\|\sum^{\datalen-1}_{t=1}\mathbb{E}[\statevec_\phi(t)\statevec_\phi^T(t)]\|_F + \lambda_2)\cdot\|X-Y\|_F.
\end{align}
In Eq. \eqref{eq: s_obs}, $f(\sparseMat) = g(\sparseMat) + \lambda_1\|\sparseMat - \eye\|_1$, and $g(\sparseMat)$ has a Lipschitz continuous gradient with the constant:
\begin{align}
    \nonumber \|\sCov^{-1}\|_F\cdot\|\sum^{\datalen-1}_{t=1}\mathbb{E}[\statevec_\phi(t)\statevec_\phi^T(t)]\|_F + \lambda_2.
\end{align} 
Under this condition and according to \cite{grad,grad2}, the following holds:
\begin{align}
    f(\sparseMat(i))-f(\sparseMat^{opt}) \leq \frac{\|\sparseMat(0)-\sparseMat^{opt}\|_F^2} {2\alpha i},
\end{align}
where $\sparseMat(0)$ represents the initial matrices and $\sparseMat^{opt}$ represents the optimal matrices, with $i$ indicating the number of iterations.
\end{proof}
\section{Fitting Non-Sparse Parameter Set}
\label{apseq: fit_nonsparse}
To estimate the parameter $\oMat$, taking the derivatives of \eqref{eq:all_ob} with respect at the components of $\oMat^{new}$ and setting them to zero yields the following results:
\begin{align}
    \label{eq: ns_st}
    \oMat^{new} &= (\sum^\datalen_{t=1} \datavec(t)\mathbb{E}[\statevec^T(t)])(\sum^\datalen_{t=1} \mathbb{E}[\statevec(t)\statevec^T(t)])^{-1}.
\end{align}
We also update the state/observation offsets $\sOf/\oOf$:
\begin{align}
    \sOf^{new} &= \frac{1}{\datalen-1}\sum^{\datalen-1}_{t=1} \{\mathbb{E}[{\statevec}(t+1)]- \sparseMat^{new}\mathbb{E}[{\statevec}_\phi(t)]\},\\
    \label{eq: ns_ed}
    \oOf^{new} &= \frac{1}{\datalen}\sum^\datalen_{t=1} \{\datavec(t)-\oMat^{new}\mathbb{E}[{\statevec}(t)]\}.
\end{align}
Finally, we estimate $\sCov, \oCov$.
\begin{align}
    \nonumber
    \sCov^{new} &= \frac{1}{\datalen-1}\sum^{\datalen-1}_{t=1}\mathbb{E}[(\statevec(t+1)-\jMat_{\mathbb{E}[\statevec(t)]}(\statevec(t)-\mathbb{E}[\statevec(t)])\\
    \nonumber
    &-\sparseMat^{new}\mathbb{E}[\statevec_\phi(t)]-\sOf^{new})(\statevec(t+1)-\jMat_{\mathbb{E}[\statevec(t)]}(\statevec(t)-\mathbb{E}[\statevec(t)]) \\
    &-\sparseMat^{new}\mathbb{E}[\statevec_\phi(t)]-\sOf^{new})^T], \\
    \nonumber
    \oCov^{new} &= \frac{1}{\datalen}\sum^\datalen_{t=1}\mathbb{E}[(\datavec(t)-\oMat^{new}\statevec(t)-\oOf^{new})\\
    &(\datavec(t)-\oMat^{new}\statevec(t)-\oOf^{new})^T].
\end{align}

\begin{figure*}[ht]
    \centering
    \includegraphics[width=1.0\linewidth]{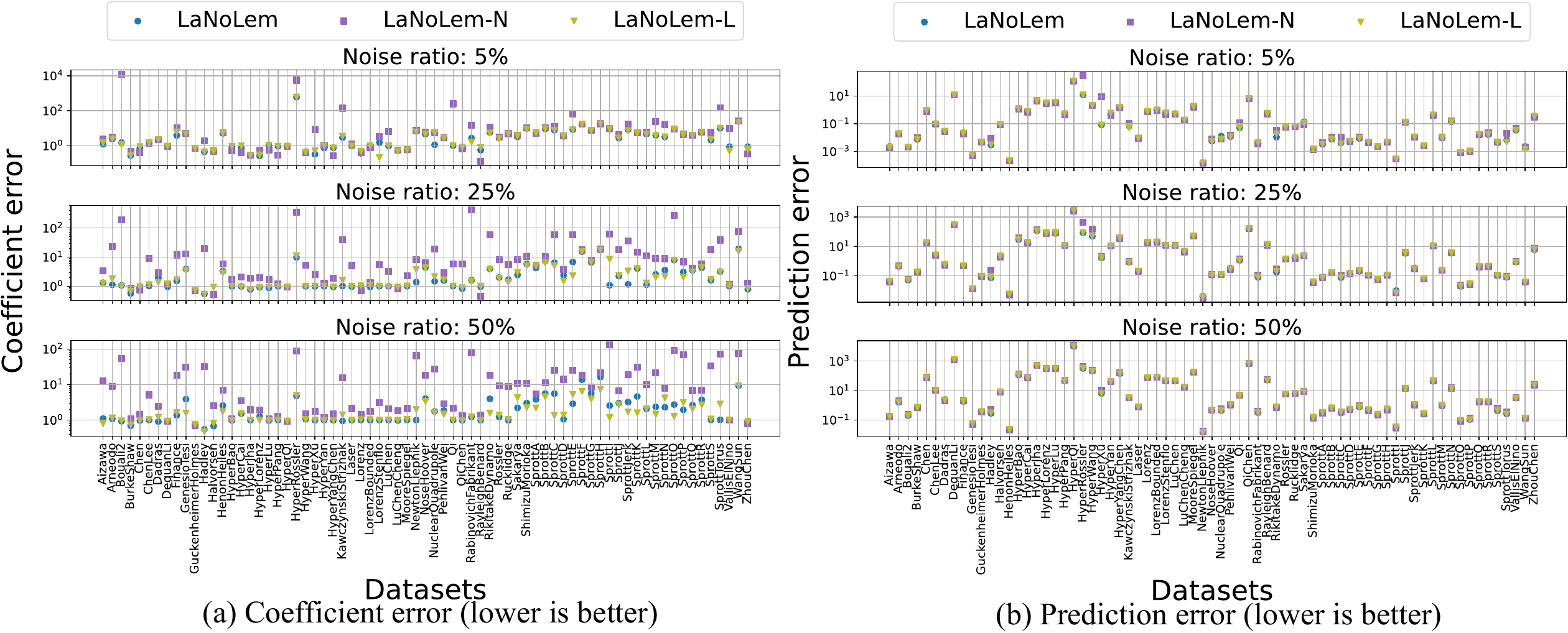}
    \vspace{-1.5em}
    \caption{Ablation study: The regularizer in \method improves (a) the estimation and (b) prediction accuracy of the system.}
    \label{apfig: exp_syn}
    \vspace{-2.0em}
\end{figure*}

\section{Experimental Setting}
\label{apseq: setting}
\mypara{Computing infrastructure}
The configuration includes 2 * Xeon Gold 6444Y 3.6GHz CPU, 12 * 64GB DDR4 RAM (768GB), 
which is sufficient for all the baselines.

\mypara{Reproduction details for \method}
Our \method optimization problem has a regularization parameter $\lambda_1$ and $\lambda_2$, which determines the sparsity level in the parameter. 
In addition to this, the order $\nldim$ and the number of state dimensions $\sdim$ must be given. 
For each experiment, 
the parameters were selected as follows.
We searched for the best results in $\lambda_1, \lambda_2 \in \{0.0, 1.0, 10, 50, 100, 500\}$ and $\nldim \in \{2,3,4\}$ to minimize MDL. We also set $\sdim = \datadim$ to make the experimental conditions the same as those in other methods used in experiments for the synthetic dataset.

\mypara{Baselines}
For the \sindy framework, as no official code is available, we used a modified version of a third-party implementation found at https://github.com/dynamicslab/pysindy \cite{pysindy}, and we used sequentially thresholded least squares (STLSQ), step-wise sparse regression (SSR) and mixed-integer optimized sparse regression (MIOSR) to optimize the parameters \cite{sindy,ssr,mio-sindy}. For MIOSR, STLSQ, and SSR, we tuned over the regularization strength $\alpha \in \{0, 10^{-5}, 10^{-3}, 0.01, 0.05, 0.2\}$. In \cite{mio-sindy}, hyperparameters are selected 
when minimizing the Akaike information criterion (AIC) metric \cite{aic,aic2}. We also selected all hyperparameters accordingly.

\mypara{Synthetics details}
We used the benchmark dataset introduced by \cite{dysts}. The length of the data used to estimate the system is 500 steps, with a time step of 0.01 between each sample. Additional data of length 100, which was not used for training, was generated to evaluate the accuracy of the prediction. Initial conditions were generated according to \cite{dysts}.
\section{Ablation Study}
\label{apseq: ablation}
In this section, ablation experiments are performed to evaluate the performance of the proposed regularizer. 
For a detailed evaluation, we use \methodl and \methodn, which are limited versions of the proposed method. \methodl uses only the $l_1$ norm as the regularizer in Eq. \eqref{eq:all_ob}, while LaNoLem-N does not use the regularizer $r(\cdot)$ in Eq. \eqref{eq:all_ob}.
Fig. \ref{apfig: exp_syn} shows a comparative analysis of \method, \methodl, and \methodn, evaluated under the same experimental conditions as in Section \ref{sec: 04_exp}. Table \ref{ap_table: win_count} summarizes the number of wins associated with each method. The results show that the inclusion of regularization improves both estimation and prediction accuracy. Specifically, at high noise ratios, the $l_1$ norm is shown to improve the accuracy of dynamics estimation, while the Frobenius norm improves prediction accuracy. \begin{table}[t]
\vspace{-1.0em}
\hspace{-1.0em}
\caption{$1^{\text{st}}$ Count in ablation study (higher is better).}
\label{ap_table: win_count}
\vspace{0.5em}
\centering
\scalebox{0.9}
{
\begin{tabular}{lcccccc}
\toprule
\large \textbf{Measures} & \multicolumn{3}{c}{\large \textbf{Coefficient error}} & \multicolumn{3}{c}{\textbf{Prediction error}}\\
\cmidrule(lr){1-1} \cmidrule(lr){2-4} \cmidrule(lr){5-7}
\large \textbf{Noise ratio} & 5\% & 25\% & 50\%  & 5\% & 25\%& 50\%\\
\midrule
\large \textbf{\methodn} & 16 & 4 & 4 & \textbf{43} & 21  & 16\\
\large \textbf{\methodl} & 27 & 33 & \textbf{40} & 17 & 21  & 21\\
\large \textbf{\method} & \textbf{28} & \textbf{34} & 27 & 11 & \textbf{29} & \textbf{34}\\
\bottomrule
\end{tabular}
}
\vspace{-1.0em}
\end{table}

\section{Full Experimental Results}
\label{apseq: full result}
Table \ref{ap_table: fullresult-1}-\ref{ap_table: fullresult-2} shows the mean and standard deviation of the Coefficient error and Prediction error for \method and its baselines for dysts dataset, with the best performance shown in \textbf{bold}.
As discussed in Section \ref{sec: 04_exp}, our method is competitive in terms of coefficient errors and consistently achieves the best prediction error. Our method achieved low coefficient error because it can filter out background noise when estimating systems using the temporal dependency of the data, while other methods can't.
\begin{table*}
\centering
\caption{Noise ratio : 5\%}
\label{ap_table: fullresult-1}
\scalebox{0.7}
{

}
\end{table*}

\end{document}